\newtheorem{theorem}{Theorem}
\newtheorem{lemma}{Lemma}
\title{\LARGE \bf
Online Distributed Trajectory Planning for Quadrotor Swarm with Feasibility Guarantee using Linear Safe Corridor
}
\author{Jungwon Park, Dabin Kim, Gyeong Chan Kim, Dahyun Oh and H. Jin Kim$^{1}$
\thanks{$^{1}$The authors are with the Department of Mechanical and Aerospace Engineering, Seoul National University (SNU), and Automation and Systems Research Institute (ASRI), Seoul 08826, South Korea
        {\tt\small \{qwerty35, dabin404, skykim0609, qlass33, hjinkim\}@snu.ac.kr}}%
}
\begin{document}

\maketitle

\begin{abstract}
This paper presents a new online multi-agent trajectory planning algorithm that guarantees to generate safe, dynamically feasible trajectories in a cluttered environment.
The proposed algorithm utilizes a linear safe corridor (LSC) to formulate the distributed trajectory optimization problem with only feasible constraints, so it does not resort to slack variables or soft constraints to avoid optimization failure.
We adopt a priority-based goal planning method to prevent the deadlock without an additional procedure to decide which robot to yield.
The proposed algorithm can compute the trajectories for 60 agents on average 15.5 ms per agent with an Intel i7 laptop and shows a similar flight distance and distance compared to the baselines based on soft constraints.
We verified that the proposed method can reach the goal without deadlock in both the random forest and the indoor space, and we validated the safety and operability of the proposed algorithm through a real flight test with ten quadrotors in a maze-like environment.
\end{abstract}

\section{INTRODUCTION} 
The trajectory planning algorithm must guarantee collision avoidance and dynamical feasibility when operating a quadrotor swarm, as one accident can lead to total failure.
Also, the need to handle a large number of agents requests a scalable online distributed algorithm.
However, there are two main challenges to generate a safe trajectory online.
First, it is difficult to find feasible collision avoidance constraints if the agents are crowded in a narrow space. Many researchers adopt soft constraints to avoid an infeasible optimization problem, but many of them cannot guarantee collision avoidance \cite{luis2020online, kandhasamy2020scalable, zhou2020ego,zhou2021decentralized}.
Second, if the obstacles have a non-convex shape, it is difficult to solve deadlock even in sparse environments using distributed planning approaches.
The centralized methods like conflict-based search (CBS) \cite{sharon2015conflict} can solve this problem, but they are not appropriate for online planning due to their long computation time. 

In this letter, we present an online trajectory planning algorithm that guarantees feasibility of the optimization problem.
The key idea of the proposed method is a linear safe corridor (LSC), which allows the feasible set of collision avoidance constraints by utilizing the trajectory at the previous step.
We design the LSC to combine the advantages of the buffered Voronoi cell (BVC) \cite{zhou2017fast} and the relative safe flight corridor (RSFC) \cite{jungwon2020efficient}.
First, the LSC can prevent collision between agents without non-convex constraints or slack variables, similar to the BVC.
Second, since the LSC has a similar form to the RSFC, the LSC can reflect the previous solution to the current optimization problem. Hence it can reduce the flight time and distance to a similar level to on-demand collision avoidance \cite{luis2020online} or discretization-based method \cite{zhou2020ego} while guaranteeing the collision avoidance which was not done in the previous works. 
By adopting the LSC, the proposed algorithm always returns a safe trajectory without optimization failure.
Furthermore, we introduce priority-based goal planning to solve deadlock in maze-like environments. Despite the limited performance in very dense environments due to the nature of a distributed approach, it can find the trajectory to the goal in sparse environments using the previous trajectories to decide which robot to yield.
We compared the proposed algorithm with state-of-the-art methods, BVC \cite{zhou2017fast}, DMPC \cite{luis2020online}, EGO-Swarm \cite{zhou2020ego} in simulations, and we conducted flight tests to demonstrate that the proposed algorithm can be applied to real-world settings. (See Fig. \ref{fig: flight test}). 
We will release the source code of this work in \url{https://github.com/qwerty35/lsc_planner}.

We summarize the main contributions as follows.
\begin{itemize}
\item An online multi-agent trajectory planning algorithm that guarantees to generate safe, feasible trajectory in cluttered environments without potential optimization failure by using the linear safe corridor.
\item Linear safe corridor that fully utilizes the previous solution to ensure collision avoidance and feasibility of the constraints.
\item Decentralized priority-based goal planning that prevents the deadlock in a sparse maze-like environment without an additional procedure to decide which robot to yield.
\end{itemize}


\begin{figure}[t]
\centering
\includegraphics[width = 0.8\linewidth]{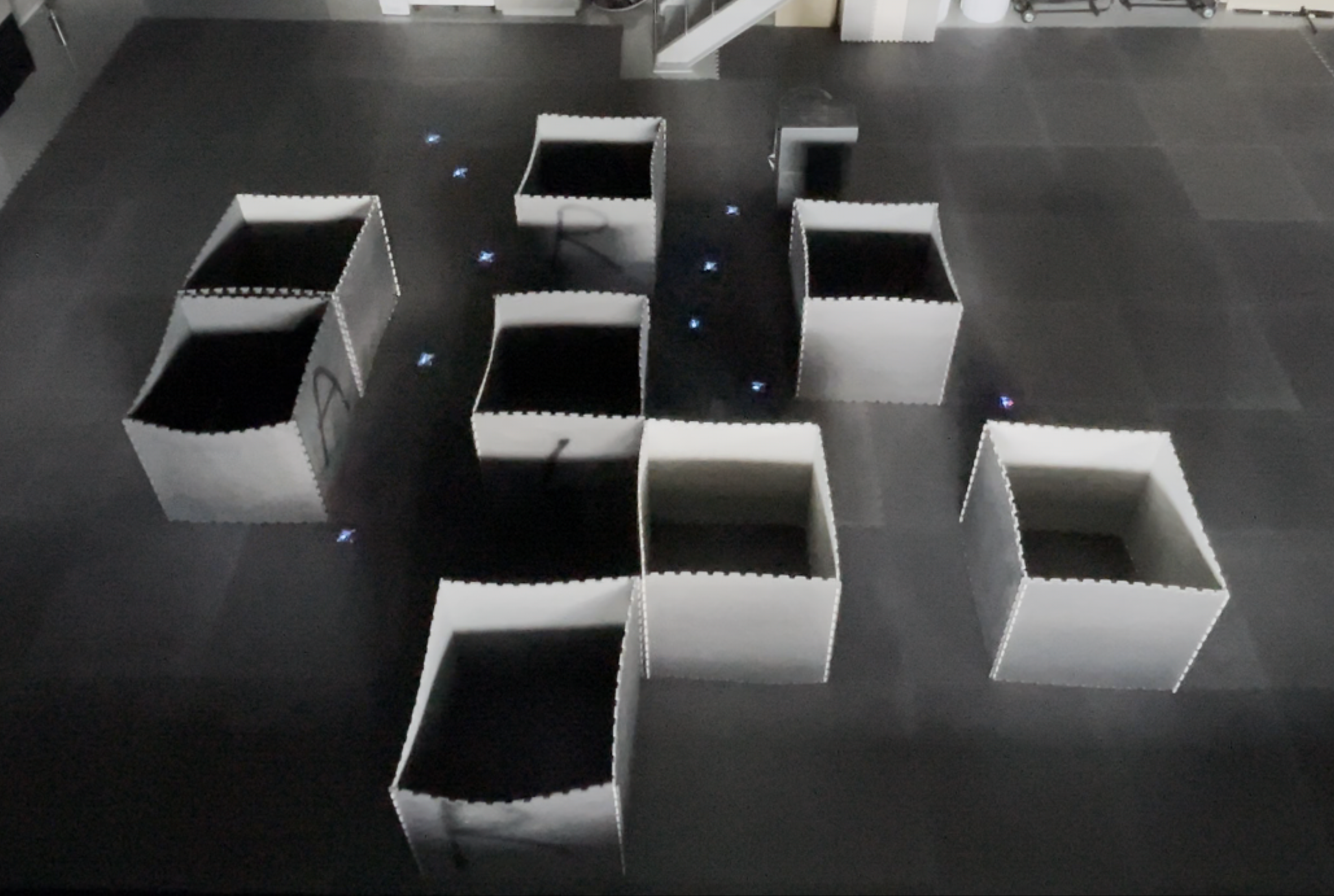}
\caption{
Experiment with 10 quadrotors in the maze-like environment.
}
\label{fig: flight test}
\vspace{-5mm}
\end{figure}

\section{RELATED WORK}
\subsection{Online Multi-Agent Collision Avoidance}

To achieve online planning, many recent works adopt discretization method, which apply the constraints only at some discrete points, not the whole trajectory \cite{luis2020online,kandhasamy2020scalable,zhou2020ego, zhou2021decentralized}.
However, this approach can occur the collision if the interval between discrete points is too long. Moreover, most of the works utilize slack variables or soft constraints to avoid the infeasible optimization problem, which increases the probability of collisions.
A velocity obstacle (VO)-based method \cite{van2011reciprocal,arul2020dcad} can be considered, but they can cause a collision either since the agents do not follow the constant velocity assumption.
On the contrary, the proposed algorithm guarantees collision avoidance for all trajectory points, and it ensures feasibility of the optimization problem without the slack variables or soft constraints.

There are several works that can guarantee both collision avoidance and feasibility.
In \cite{zhou2017fast}, buffered Voronoi cell (BVC) is presented to confine the agents to a safe region, and in \cite{wang2017safety}, a hybrid braking controller is applied for safety if the planner faces feasibility issues. However, these methods require conservative collision constraints. 
On the other hand, the proposed method employs less conservative constraints because it constructs the LSC considering the previous trajectories.
The authors of \cite{tordesillas2021mader} suggest an asynchronous planning method using a collision check-recheck scheme. This method guarantees feasibility by executing only verified trajectories through communication. However, it often takes over few seconds for replanning since it blocks the update if it receives the other agent's trajectories during the planning.
Conversely, the proposed method uses a synchronized replanning method, which takes constant time to update trajectories. 

Compared to our previous works \cite{jungwon2020efficient, park2020online}, the proposed method does not require a centralized grid-based path planner or slack variables for feasibility. Thus, the proposed algorithm guarantees inter-agent collision avoidance without soft constraints while enabling online distributed planning.

\subsection{Decentralized Deadlock Resolution}
A deadlock can occur if the agents are crowded in a narrow space without a centralized coordinator.
One approach to solve this is a right-hand rule \cite{zhu2019b}, which rotates the goal to the right when the agent detects the deadlock. However, it often causes another deadlock in cluttered environments.
In \cite{jager2001decentralized}, the agents are sequentially asked to plan an alternative trajectory until the deadlock is resolved.
However, it requires an additional procedure to decide which robot to yield. On the other hand, the proposed method does not require such a decision process because it determines the priority by the distance to the goal.
The authors of \cite{desaraju2012decentralized} present a cooperative strategy based on bidding to resolve the deadlock. However, this method takes more time for replanning as the number of agents increases because it updates only the trajectory of the winning bidder.
A similar approaches to our proposed method can be found in \cite{csenbacslar2019robust}. It utilizes the grid-based planner to avoid conflict between agents. However, it often fails to find the discrete path in a compact space because it treats all other agents as static obstacles.
To solve this, the proposed method imposes the priority on the agents so that the agents ignore the lower-priority agents while planning the discrete path.

\section{PROBLEM FORMULATION}
\label{sec: problem formulation}
In this section, we formulate a distributed trajectory optimization problem that guarantees feasibility.
We suppose that there are $N$ agents in an obstacle environment $\mathcal{W}$, and each agent $i$ has the mission to move from the start point $\textbf{s}^{i}$ to the goal point $\textbf{g}^{i}$.
Our goal is to generate a safe, continuous trajectory so that all agents can reach the goal point without collision or exceeding the agent's dynamical limits. 
Throughout this paper, the calligraphic uppercase letter means set, the bold lowercase letter means vector, and the italic lowercase letter means scalar value. The superscript of each symbol shows the index of the associated agent.

\subsection{Assumption}
We assume that all agents start their mission with the collision-free initial configuration and they can follow the trajectory accurately.
We suppose that all agents can share their current positions, goal points and the trajectories planned in the previous step without delay or loss of communication.
We also assume that the free space of the environment is given as prior knowledge.

\subsection{Trajectory Representation}
Due to the differential flatness of quadrotor dynamics \cite{mellinger2011minimum}, we represent the trajectory of agent $i$ at the replanning step $k$, $\textbf{p}^{i}_{k}(t)$, with the piecewise Bernstein polynomial:
\begin{equation}
\begin{alignedat}{2}
    \textbf{p}^{i}_{k}(t) = 
    \begin{cases} 
    \ \sum_{l=0}^{n}\textbf{c}^{i}_{k,1,l}b_{l,n}(\tau(t,k,1))   & t \in [T_{k}, T_{k+1}] \\
    \ \sum_{l=0}^{n}\textbf{c}^{i}_{k,2,l}b_{l,n}(\tau(t,k,2))   & t \in [T_{k+1}, T_{k+2}] \\
    \ \vdotswithin{=}                                                 & \vdotswithin{\in} \\    
    \ \sum_{l=0}^{n}\textbf{c}^{i}_{k,M,l}b_{l,n}(\tau(t,k,M))   & t \in [T_{k+M-1}, \\ 
    & \:\:\:\:\:\:\:\: T_{k+M}] \\
    \end{cases}
\end{alignedat}
\label{eq: trajectory representation}
\end{equation}
where $n$ is degree of polynomial, $\textbf{c}^{i}_{k,m,l} \in \mathbb{R}^3$ is the $l^{th}$ control point of the $m^{th}$ segment of the trajectory, $b_{l,n}(t)$ is Bernstein basis polynomials, $T_{k}$ is the start time of the replanning step $k$, $T_{k+m}=T_{k}+m\Delta t$, $\Delta t$ is duration of each segment, and $\tau(t,k,m)=(t-T_{k+m-1})/\Delta t$.
Thus, the control points of the piecewise polynomial, $\textbf{c}^i_{k,m,l}$ for $m=1,\cdots,M$ and $l=0,\cdots,n$, are the decision variables of our optimization problem.
Here, the Bernstein basis can be replaced with another one that satisfies the convex hull property, such as \cite{tordesillas2020minvo}.
In this paper, we denote the $m^{th}$ segment of the trajectory as $\textbf{p}^{i}_{k,m}(t)$ and the vector that contains all control points of the trajectory as $\textbf{c}^{i}_{k} \in \mathbb{R}^{3M(n+1)}$.

\subsection{Objective Function}
\subsubsection{Error to goal}
We minimize the distance between the agent and the current goal point $\textbf{g}^{i}_{curr}$ so that the agent can reach the final goal point.
\begin{equation}
    J^{i}_{err} = \sum\limits_{m=1}^{M} w_{e,m} \|\textbf{p}^{i}_{k}(T_{m})-\textbf{g}^{i}_{curr}\|^{2}
\label{eq: objective function error to goal}
\end{equation}
where $w_{e,m=1,\cdots,M} \geq 0$ is the weight coefficient and $\|\cdot\|$ is the Euclidean norm.

\subsubsection{Derivatives of trajectory}
We minimize the derivatives of trajectory to penalize aggressive maneuvers of the agent.
\begin{equation}
\begin{aligned}
    J^{i}_{der} = \sum\limits_{r=1}^{n}w_{d,r}\int_{T_0}^{T_M}\left\|\frac{d^{r}}{dt^{r}}\textbf{p}^{i}_{k}(t)\right\|^{2}dt
\label{eq: objective function derivative of trajectory}
\end{aligned}
\end{equation}
where $w_{d,r=1,\cdots,n} \geq 0$ is the weight coefficient. In this work, we set the weights to minimize the jerk.

\subsection{Convex Constraints}
The trajectory should satisfy the initial condition to match the agent's current position, velocity and acceleration.
Also, the trajectory should be continuous up to the acceleration for smooth flight.
Due to the dynamical limit of quadrotor, the agents should not exceed their maximum velocity and acceleration.
We can formulate these constraints in affine equality and inequality constraints using the convex hull property of the Bernstein polynomials \cite{zettler1998robustness}:
\begin{equation}
    A_{eq}\textbf{c}^{i}_{k} = \textbf{b}_{eq}
\label{eq: equality constraints}
\end{equation}
\begin{equation}
    A_{dyn}\textbf{c}^{i}_{k} \preceq \textbf{b}_{dyn}
\label{eq: dynamic feasible constraints}
\end{equation}

\subsection{Collision Avoidance Constraints}
\label{subsec: collision avoidance constraints}
\subsubsection{Obstacle avoidance constraint}
To avoid collision with static obstacle, we formulate the constraint as follows:
\begin{equation}
    \textbf{p}^{i}_{k}(t) \oplus \mathcal{C}^{i,o} \subset \mathcal{F}, \:\:\: \forall t \in [T_{k}, T_{k+M}]
\label{eq: obstacle collision avoidance constraint}
\end{equation}
\begin{equation}
  \mathcal{C}^{i,o}=\{\textbf{x} \in \mathbb{R}^3 \mid \|\textbf{x}\| \leq r^{i}\}
\label{eq: obstacle collision model}
\end{equation}
where $\oplus$ is the Minkowski sum, $\mathcal{F}$ is the free space of the environment, and $\mathcal{C}^{i,o}$ is an obstacle collision model that has sphere shape, and $r^i$ is the radius of agent $i$.

\subsubsection{Inter-collision avoidance constraint}
We can represent the collision avoidance constraint between the agents $i$ and $j$ as follows:
\begin{equation}
     \textbf{p}^{i}_{k}(t) - \textbf{p}^{j}_{k}(t) \notin \mathcal{C}^{i,j}
\label{eq: collision avoidance constraint}
\end{equation}
where $\mathcal{C}^{i,j}$ is the inter-collision model, which is a compact convex set that satisfies 
$\mathcal{C}^{j,i} = -\mathcal{C}^{i,j} = \{-\textbf{x} \mid \textbf{x} \in \mathcal{C}^{i,j}\}$
to maintain symmetry between agents.
In this work, we adopt the ellipsoidal collision model to consider the downwash effect of the quadrotor:
\begin{equation}
    \mathcal{C}^{i,j} = \{\textbf{x} \in \mathbb{R}^3 \mid \|E \textbf{x}\| \leq r^{i}+r^{j}\}
\label{eq: collision model}
\end{equation}
where $E=diag([1,1,1/c_{dw}])$ and $c_{dw}$ is the downwash coefficient.
Since (\ref{eq: collision avoidance constraint}) is a non-convex constraint, we linearize it using the convex hull property of Bernstein polynomial:
\begin{equation}
    \mathcal{H}^{i,j}_{k,m} \cap \mathcal{C}^{i,j} = \emptyset, \:\:\: \forall j \in \mathcal{I} \backslash i, m=1,\cdots,M
\label{eq: linearized collision avoidance constraint1}
\end{equation}
\begin{equation}
    \mathcal{H}^{i,j}_{k,m} = \text{Conv}(\{\textbf{c}^{i}_{k,m,l} - \textbf{c}^{j}_{k,m,l} \mid l=0,\cdots,n\})
\label{eq: linearized collision avoidance constraint2}
\end{equation}
where $\mathcal{H}^{i,j}_{k,m}$ is the convex hull of the control points of relative trajectory between the agents $i$ and $j$, $\mathcal{I}$ is a set of agents, and $\text{Conv}(\cdot)$ is the convex hull operator that returns a convex hull of the input set. 

\begin{theorem}
If the trajectories of all agents satisfy the condition (\ref{eq: linearized collision avoidance constraint1}), then there is no collision between agents.
\label{theorem: linearized collision avoidance constraint}
\end{theorem}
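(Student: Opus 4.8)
The plan is to show that the linearized, convex condition in (\ref{eq: linearized collision avoidance constraint1}) implies the original nonconvex no-collision condition (\ref{eq: collision avoidance constraint}) pointwise in time, by invoking the convex hull property of the Bernstein basis that was already cited for the trajectory representation. The guiding observation is that collision between agents $i$ and $j$ at time $t$ means exactly that the relative position $\textbf{p}^{i}_{k}(t)-\textbf{p}^{j}_{k}(t)$ lies in the collision set $\mathcal{C}^{i,j}$; so it suffices to prove that this relative position can never enter $\mathcal{C}^{i,j}$ whenever every relevant convex hull $\mathcal{H}^{i,j}_{k,m}$ is disjoint from $\mathcal{C}^{i,j}$.

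First I would fix an arbitrary ordered pair $(i,j)$ with $j\neq i$ and an arbitrary time $t\in[T_{k},T_{k+M}]$, and locate the segment index $m$ for which $t$ lies in $[T_{k+m-1},T_{k+m}]$. On that segment both agents use the same basis $b_{l,n}(\tau(t,k,m))$ over the identical interval, so by linearity the relative trajectory is itself a degree-$n$ Bernstein polynomial whose control points are the differences $\textbf{c}^{i}_{k,m,l}-\textbf{c}^{j}_{k,m,l}$; that is, $\textbf{p}^{i}_{k,m}(t)-\textbf{p}^{j}_{k,m}(t)=\sum_{l=0}^{n}(\textbf{c}^{i}_{k,m,l}-\textbf{c}^{j}_{k,m,l})\,b_{l,n}(\tau(t,k,m))$. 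Next I would invoke the convex hull property: since the Bernstein basis functions are nonnegative and sum to one on $[0,1]$, the right-hand side is a convex combination of the control points and therefore lies in $\mathcal{H}^{i,j}_{k,m}=\text{Conv}(\{\textbf{c}^{i}_{k,m,l}-\textbf{c}^{j}_{k,m,l}\mid l=0,\dots,n\})$ as defined in (\ref{eq: linearized collision avoidance constraint2}). Combining this with the hypothesis $\mathcal{H}^{i,j}_{k,m}\cap\mathcal{C}^{i,j}=\emptyset$ gives $\textbf{p}^{i}_{k}(t)-\textbf{p}^{j}_{k}(t)\notin\mathcal{C}^{i,j}$, which is precisely (\ref{eq: collision avoidance constraint}). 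Because $t$, $m$, and the pair $(i,j)$ were arbitrary, the no-collision condition holds over the whole horizon for every pair of agents.

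The argument is essentially a direct application of the convex hull property, so there is no deep obstacle; the one point I would be careful to state explicitly is the \emph{alignment} that makes the difference of two Bernstein polynomials again a Bernstein polynomial with differenced control points. This requires that both agents share the same segment partition $T_{k+m}=T_{k}+m\Delta t$, the same degree $n$, and the same reparametrization $\tau(t,k,m)$, so that the subtraction happens coefficient-by-coefficient against a common basis. Under the trajectory representation (\ref{eq: trajectory representation}) this alignment holds by construction, and I would note that the symmetry $\mathcal{C}^{j,i}=-\mathcal{C}^{i,j}$ ensures the conclusion is consistent whether the pair is read as $(i,j)$ or $(j,i)$, completing the proof.
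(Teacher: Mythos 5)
Your proposal is correct and follows essentially the same route as the paper's own proof: express the relative trajectory on each segment as a Bernstein polynomial with differenced control points, apply the convex hull property to place it inside $\mathcal{H}^{i,j}_{k,m}$, and conclude from the disjointness hypothesis that (\ref{eq: collision avoidance constraint}) holds for every segment. Your extra remarks on basis alignment and the symmetry $\mathcal{C}^{j,i}=-\mathcal{C}^{i,j}$ are sound clarifications of implicit steps rather than a different argument.
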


\begin{proof}
For any pair of the agents $i$ and $j$, the relative trajectory between two agents is a piecewise Bernstein polynomial:
\begin{equation}
    \textbf{p}^{i}_{k,m}(t) - \textbf{p}^{j}_{k,m}(t) = \sum_{l=0}^{n}(\textbf{c}^{i}_{k,m,l}-\textbf{c}^{j}_{k,m,l})b_{l,n}(\tau(t,k,m))
\label{eq: linearized collision avoidance constraint proof1}
\end{equation}
Due to the convex hull property of the Bernstein polynomial and (\ref{eq: linearized collision avoidance constraint1}), we obtain the following for $\forall m, t \in [T_{k+m-1}, T_{k+m}]$.
\begin{equation}
    \textbf{p}^{i}_{k,m}(t) - \textbf{p}^{j}_{k,m}(t) \in \mathcal{H}^{i,j}_{k,m}
\label{eq: linearized collision avoidance constraint proof2}
\end{equation}
\begin{equation}
    \textbf{p}^{i}_{k,m}(t) - \textbf{p}^{j}_{k,m}(t) \notin \mathcal{C}^{i,j}
\label{eq: linearized collision avoidance constraint proof3}
\end{equation}
Thus, there is no collision between agents because they satisfy the collision constraint (\ref{eq: collision avoidance constraint}) for any segment $m$.
\end{proof}

\section{Algorithm}
\label{sec: algorithm}
In this section, we introduce the trajectory planning algorithm that guarantees to return the feasible solution.
Alg. \ref{alg: trajectory planning algorithm} summarizes the overall process.
First, we receive the current positions, goal points and trajectories at the previous step for all agents as inputs.
Then, we plan initial trajectories using the trajectory at the previous replanning step (line 2, Sec. \ref{subsec: initial trajectory planning}).
Next, we construct collision avoidance constraints based on the initial trajectories (lines 4-7, Sec. \ref{subsec: safe flight corridor construction}, \ref{subsec: relative safe flight corridor construction}).
After that, we determine the intermediate goal using the grid-based path planner (line 8, Sec. \ref{subsec: goal planning}).
Finally, we obtain the safe trajectory by solving the quadratic programming (QP) problem (line 9, Sec. \ref{subsec: trajectory optimization}).
We will prove the feasibility of the optimization problem by showing that the initial trajectory satisfies all the constraints.

\setlength{\textfloatsep}{5pt}
\begin{algorithm}
\SetAlgoLined
\KwIn{current positions $\textbf{p}^{i \in \mathcal{I}}_{k}(T_k)$, goal points $\textbf{g}^{i \in \mathcal{I}}$, trajectories at the previous step $\textbf{p}^{i \in \mathcal{I}}_{k-1}(t)$, 
and 3D occupancy map $\mathcal{W}$}
\KwOut{trajectory $\textbf{p}^{i}_{k}(t)$ for agent $i$, $t \in [T_{k},T_{k+M}]$}
  \For{$\forall i \in \mathcal{I} $}{
    $\hat{\textbf{p}}^{i}_{k}(t) \gets$ planInitialTraj($\textbf{p}^{i}_{k}(T_{k}), \textbf{p}^{i}_{k-1}(t)$)\;
  }
  $\mathcal{S}^{i}_{k} \gets$ buildSFC($\hat{\textbf{p}}^{i}_{k}(t), \mathcal{W}$)\;
  \For{$\forall j \in \mathcal{I} \backslash i$}{
    $\mathcal{L}^{i,j}_{k} \gets$ buildLSC($\hat{\textbf{p}}^{i}_{k}(t), \hat{\textbf{p}}^{j}_{k}(t)$)\;
  }
  $\textbf{g}^{i}_{curr} \gets$ planCurrentGoal($\hat{\textbf{p}}^{i \in \mathcal{I}}_{k}(t), \textbf{g}^{i \in \mathcal{I}}, \mathcal{W}$)\;
  $\textbf{p}^{i}_{k}(t) \gets$ trajOpt($ \mathcal{S}^{i}_{k}, \mathcal{L}^{i,\forall j \in \mathcal{I} \backslash i}_{k}, \textbf{g}^{i}_{curr})$\;
  \KwRet{$\textbf{p}^{i}_{k}(t)$}
\caption{Trajectory planning for agent $i$ at replanning step $k$}
\label{alg: trajectory planning algorithm}
\end{algorithm}

\subsection{Initial Trajectory Planning}
\label{subsec: initial trajectory planning}
We utilize the trajectory at the previous replanning step, $\textbf{p}^{i}_{k-1}(t)$, as an initial trajectory. If it is the first step of the planning, we then use the current position instead. Suppose that the replanning period is equal to the segment time $\Delta t$. Then we can represent the initial trajectory as piecewise Bernstein polynomial.
\begin{equation}
\begin{alignedat}{2}
    \hat{\textbf{p}}^{i}_{k}(t) = 
    \begin{cases} 
    \ \textbf{p}^{i}_{0}(T_{0})   & k = 0, t \in [T_{0}, T_{M}] \\  
    \ \textbf{p}^{i}_{k-1,m+1}(t)   & k > 0, t \in [T_{k}, T_{k+M-1}] \\    
    \ \textbf{p}^{i}_{k-1,M}(T_{k+M-1})   & k > 0, t \in [T_{k+M-1}, T_{k+M}] \\
    \end{cases}
\end{alignedat}
\label{eq: initial trajectory}
\end{equation}
where $\hat{\textbf{p}}^{i}_{k}(t)$ is the initial trajectory at the replanning step $k$.
We denote the $m^{th}$ segment of the initial trajectory as $\hat{\textbf{p}}^{i}_{k,m}(t)$ and the $l^{th}$ control point of this segment as $\hat{\textbf{c}}^{i}_{k,m,l}$.
Note that the control point of initial trajectory can be represented using the previous ones (e.g. $\hat{\textbf{c}}^{i}_{k,m,l} = \textbf{c}^{i}_{k-1,m+1,l}$ for $\forall m < M,l$) and there is no collision between initial trajectories if there is no collision between the previous ones. 

\subsection{Safe Flight Corridor Construction}
\label{subsec: safe flight corridor construction}
A safe flight corridor (SFC), $\mathcal{S}^{i}_{k,m}$, is a convex set that prevents the collision with the static obstacles:
\begin{equation}
  \mathcal{S}^{i}_{k,m} \oplus \mathcal{C}^{i,o} \subset \mathcal{F}
\label{eq: safe flight corridor definition2}
\end{equation}
To guarantee the feasibility of the constraints, we construct the SFC using the initial trajectory and the SFC of the previous step.
\begin{equation}
\begin{alignedat}{2}
    \mathcal{S}^{i}_{k,m} = 
    \begin{cases} 
    \ \mathcal{S}(\hat{\textbf{c}}^{i}_{k,M,n})   & k = 0 \text{ or } m = M \\  
    \ \mathcal{S}^{i}_{k-1,m+1}   & \text{else} \\
    \end{cases}
\end{alignedat}
\label{eq: safe flight corridor construction}
\end{equation}
where $\mathcal{S}(\hat{\textbf{c}}^{i}_{k,M,n})$ is a convex set that contains the point  $\hat{\textbf{c}}^{i}_{k,M,n}$ and satisfies $\mathcal{S}(\hat{\textbf{c}}^{i}_{k,M,n}) \oplus \mathcal{C}^{i,o} \subset \mathcal{F}$. Note that the SFC in (\ref{eq: safe flight corridor construction}) always exists for all replanning steps if the control points of the initial trajectory do not collide with static obstacles. We can find this using the axis-search method \cite{jungwon2020efficient}.

Assume that the control points of $\textbf{p}^{i}_{k,m}$ is confined in the corresponding SFC, i.e. for $m=1,\cdots,M$ and $l=0,\cdots,n$:
\begin{equation}
  \textbf{c}^{i}_{k,m,l} \in \mathcal{S}^{i}_{k,m}
\label{eq: safe flight corridor previous trajectory}
\end{equation}
Then, there is no collision with static obstacles due to the convex hull property of the Bernstein polynomial \cite{tang2016safe}.
Also, we can prove that the control points of the initial trajectory always belong to the corresponding SFC for all replanning steps.

\begin{lemma}
(Feasibility of SFC) Assume that $\textbf{c}^{i}_{k-1,m,l} \in \mathcal{S}^{i}_{k-1,m}$ for $\forall m,l$ at the replanning step $k>0$. Then, $\hat{\textbf{c}}^{i}_{k,m,l}(t) \in \mathcal{S}^{i}_{k,m}$ for $\forall k, m, l$.
\label{lemma: feasibility of safe flight corridor}
\end{lemma}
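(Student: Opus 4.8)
The plan is to proceed by a case split on the segment index $m$ that mirrors the two branches appearing in both the initial-trajectory definition \eqref{eq: initial trajectory} and the SFC construction \eqref{eq: safe flight corridor construction}. The central observation is that these two recursions shift in lockstep. For $k>0$ and $m<M$, the initial control point is merely a relabeling of a previous control point, $\hat{\textbf{c}}^{i}_{k,m,l}=\textbf{c}^{i}_{k-1,m+1,l}$, and the corresponding corridor is the same relabeling of a previous corridor, $\mathcal{S}^{i}_{k,m}=\mathcal{S}^{i}_{k-1,m+1}$. For the terminal segment $m=M$ the initial trajectory is held constant at the previous endpoint, so every control point collapses to the single point $\hat{\textbf{c}}^{i}_{k,M,n}$, while the corridor is freshly built as $\mathcal{S}(\hat{\textbf{c}}^{i}_{k,M,n})$, which is defined to contain exactly that point.

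First I would handle the case $m<M$ with $k>0$. Substituting the shift identity for the control point and the shift identity for the corridor reduces the claim $\hat{\textbf{c}}^{i}_{k,m,l}\in\mathcal{S}^{i}_{k,m}$ to $\textbf{c}^{i}_{k-1,m+1,l}\in\mathcal{S}^{i}_{k-1,m+1}$. This is precisely the hypothesis $\textbf{c}^{i}_{k-1,m,l}\in\mathcal{S}^{i}_{k-1,m}$ read at the segment index $m+1$, so the inclusion is immediate.

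Next I would treat the terminal segment $m=M$, together with the base case $k=0$, where the entire initial trajectory is the constant current position and the same argument applies verbatim. In both situations the segment is a constant Bernstein polynomial, so all of its control points coincide with its endpoint value $\hat{\textbf{c}}^{i}_{k,M,n}$ (equal to the collision-free previous endpoint $\textbf{c}^{i}_{k-1,M,n}$). Since $\mathcal{S}^{i}_{k,M}=\mathcal{S}(\hat{\textbf{c}}^{i}_{k,M,n})$ is constructed as a convex set containing that very point, the inclusion $\hat{\textbf{c}}^{i}_{k,M,l}=\hat{\textbf{c}}^{i}_{k,M,n}\in\mathcal{S}^{i}_{k,M}$ holds by definition. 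Combining the two cases yields the claim for every $m$, and hence for all $k,m,l$.

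I do not expect a genuine obstacle here, since this is essentially a bookkeeping lemma underpinning the feasibility argument. The only point requiring care is verifying that the index shift in the corridor recursion \eqref{eq: safe flight corridor construction} is aligned one-to-one with the index shift in the trajectory relabeling \eqref{eq: initial trajectory}, so that invoking the hypothesis at the shifted index $m+1$ is legitimate; this alignment is by design, as both recursions discard the first (executed) segment and append a fresh terminal one. A secondary subtlety is noting that the freshly built terminal corridor $\mathcal{S}(\hat{\textbf{c}}^{i}_{k,M,n})$ is well defined, which is guaranteed because the terminal control point inherited from a collision-free previous trajectory does not lie inside a static obstacle, as remarked after \eqref{eq: safe flight corridor construction}.
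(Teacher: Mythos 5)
Your proof is correct and follows essentially the same route as the paper's: the same three-way case split ($k=0$; $k>0$ with $m<M$ via the aligned index shift in \eqref{eq: initial trajectory} and \eqref{eq: safe flight corridor construction}; $k>0$ with $m=M$ via the constant terminal segment whose control points all equal $\hat{\textbf{c}}^{i}_{k,M,n} \in \mathcal{S}(\hat{\textbf{c}}^{i}_{k,M,n})$). Your added remark on the well-definedness of the freshly built terminal corridor is a point the paper handles in the surrounding text rather than in the proof itself, but the argument is otherwise identical.
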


\begin{proof}
If $k=0$, we obtain $\hat{\textbf{c}}^{i}_{0,m,l} \in \mathcal{S}^{i}_{0,m}$ for $\forall m, l$ since $\hat{\textbf{c}}^{i}_{0,m,l} = \hat{\textbf{c}}^{i}_{0,M,n}$ for $\forall m, l$.
If $k>0$ and $m<M$, we have $\hat{\textbf{c}}^{i}_{k,m,l} = \textbf{c}^{i}_{k-1,m+1,l} \in \mathcal{S}^{i}_{k-1,m+1} = \mathcal{S}^{i}_{k,m}$ for $\forall l$ due to (\ref{eq: initial trajectory}) and (\ref{eq: safe flight corridor previous trajectory}).
If $k>0$ and $m=M$, we obtain $\hat{\textbf{c}}^{i}_{k,M,l} = \hat{\textbf{c}}^{i}_{k,M,n} \in \mathcal{S}(\hat{\textbf{c}}^{i}_{k,M,n}) = \mathcal{S}^{i}_{k,M}$ for $\forall l$.
Therefore, we have $\hat{\textbf{c}}^{i}_{k,m,l}(t) \in \mathcal{S}^{i}_{k,m}$ for $\forall k,m,l$.
\end{proof}


\subsection{Linear Safe Corridor Construction}
\label{subsec: relative safe flight corridor construction}
We define the LSC as a linear constraint that satisfies the following conditions:
\begin{equation}
  \mathcal{L}^{i,j}_{k,m,l} = \{\textbf{x} \in \mathbb{R}^3 \mid (\textbf{x} - \hat{\textbf{c}}^{j}_{k,m,l}) \cdot \textbf{n}^{i,j}_{m} - d^{i,j}_{m,l} > 0\}
\label{eq: relative safe flight corridor definition2}
\end{equation}
\begin{equation}
  \textbf{n}^{i,j}_{m} = -\textbf{n}^{j,i}_{m}
\label{eq: relative safe flight corridor definition3}
\end{equation}
\begin{equation}
  d^{i,j}_{m,l} + d^{j,i}_{m,l} \geq \max \langle \mathcal{C}^{i,j}, \textbf{n}^{i,j}_{m} \rangle + (\hat{\textbf{c}}^{i}_{k,m,l} - \hat{\textbf{c}}^{j}_{k,m,l}) \cdot \textbf{n}^{i,j}_{m}
\label{eq: relative safe flight corridor definition4}
\end{equation}
where $\textbf{n}^{i,j}_{m} \in \mathbb{R}^{3}$ is the normal vector, $d^{i,j}_{m,l}$ is the safety margin and $\max \langle \mathcal{C}^{i,j}, \textbf{n}^{i,j}_{m} \rangle = \max_{\textbf{x} \in \mathcal{C}^{i,j}} \textbf{x} \cdot \textbf{n}^{i,j}_{m}$. Let $\hat{\mathcal{H}}^{i,j}_{k,m}$ is the convex hull of the control points of relative initial trajectory, i.e.  $\hat{\mathcal{H}}^{i,j}_{k,m}=\text{Conv}(\{\hat{\textbf{c}}^{i}_{k,m,l} - \hat{\textbf{c}}^{j}_{k,m,l} \mid l=0,\cdots,n\})$. Lemmas \ref{lemma: safety of relative safe flight corridor} and \ref{lemma: existence and feasibility of relative safe flight corridor} present the main properties of LSC.

\begin{lemma}
(Safety of LSC) If $\textbf{c}^{i}_{k,m,l} \in \mathcal{L}^{i,j}_{k,m,l}$, $\textbf{c}^{j}_{k,m,l} \in \mathcal{L}^{j,i}_{k,m,l}$ for $\forall m,l$ then we obtain $\mathcal{H}^{i,j}_{k,m} \cap  \mathcal{C}^{i,j} = \emptyset$ which implies that the agent $i$ does not collide with the agent $j$.
\label{lemma: safety of relative safe flight corridor}
\end{lemma}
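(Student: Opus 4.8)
The plan is to show that the two half-space constraints on the individual control points, together with the defining inequality (\ref{eq: relative safe flight corridor definition4}) of the LSC, force the entire convex hull $\mathcal{H}^{i,j}_{k,m}$ of relative control points to lie strictly on the safe side of a separating hyperplane whose normal is $\textbf{n}^{i,j}_{m}$. Since $\mathcal{C}^{i,j}$ lies entirely on the other side of that hyperplane, the two sets cannot intersect, and then Theorem \ref{theorem: linearized collision avoidance constraint} delivers the collision-avoidance conclusion.

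First I would translate the hypotheses $\textbf{c}^{i}_{k,m,l} \in \mathcal{L}^{i,j}_{k,m,l}$ and $\textbf{c}^{j}_{k,m,l} \in \mathcal{L}^{j,i}_{k,m,l}$ into the two scalar inequalities obtained directly from (\ref{eq: relative safe flight corridor definition2}), namely $(\textbf{c}^{i}_{k,m,l} - \hat{\textbf{c}}^{j}_{k,m,l}) \cdot \textbf{n}^{i,j}_{m} - d^{i,j}_{m,l} > 0$ and the analogous one for agent $j$ with normal $\textbf{n}^{j,i}_{m}$. Using the antisymmetry (\ref{eq: relative safe flight corridor definition3}), $\textbf{n}^{j,i}_{m} = -\textbf{n}^{i,j}_{m}$, I would rewrite the second inequality in terms of $\textbf{n}^{i,j}_{m}$ so that both speak about the same direction. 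Adding the two inequalities, the mixed terms $\hat{\textbf{c}}^{i}_{k,m,l}$ and $\hat{\textbf{c}}^{j}_{k,m,l}$ combine, and I would substitute the lower bound on $d^{i,j}_{m,l} + d^{j,i}_{m,l}$ from (\ref{eq: relative safe flight corridor definition4}). After cancellation this should yield a clean lower bound of the form $(\textbf{c}^{i}_{k,m,l} - \textbf{c}^{j}_{k,m,l}) \cdot \textbf{n}^{i,j}_{m} > \max \langle \mathcal{C}^{i,j}, \textbf{n}^{i,j}_{m} \rangle$, valid for every control-point index $l$.

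Next I would promote this per-control-point bound to the whole convex hull. Because any point of $\mathcal{H}^{i,j}_{k,m}$ is a convex combination of the relative control points $\textbf{c}^{i}_{k,m,l} - \textbf{c}^{j}_{k,m,l}$, and the map $\textbf{x} \mapsto \textbf{x} \cdot \textbf{n}^{i,j}_{m}$ is linear, the same strict inequality $\textbf{x} \cdot \textbf{n}^{i,j}_{m} > \max \langle \mathcal{C}^{i,j}, \textbf{n}^{i,j}_{m} \rangle$ holds for every $\textbf{x} \in \mathcal{H}^{i,j}_{k,m}$. On the other hand, by the definition of the support value, every $\textbf{y} \in \mathcal{C}^{i,j}$ satisfies $\textbf{y} \cdot \textbf{n}^{i,j}_{m} \leq \max \langle \mathcal{C}^{i,j}, \textbf{n}^{i,j}_{m} \rangle$. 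These two facts are contradictory for any common point, so $\mathcal{H}^{i,j}_{k,m} \cap \mathcal{C}^{i,j} = \emptyset$; invoking Theorem \ref{theorem: linearized collision avoidance constraint} then gives non-collision of agents $i$ and $j$.

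I expect the main obstacle to be the bookkeeping in the summation step: carefully tracking the signs introduced by (\ref{eq: relative safe flight corridor definition3}) and verifying that the cross terms involving the initial-trajectory control points $\hat{\textbf{c}}^{i}_{k,m,l}, \hat{\textbf{c}}^{j}_{k,m,l}$ cancel exactly against the term $(\hat{\textbf{c}}^{i}_{k,m,l} - \hat{\textbf{c}}^{j}_{k,m,l}) \cdot \textbf{n}^{i,j}_{m}$ appearing in (\ref{eq: relative safe flight corridor definition4}). A secondary subtlety is the handling of the strict versus non-strict inequalities: the half-space definition is strict while the support-function bound is non-strict, so I would make sure the strictness is carried through the sum to guarantee an empty intersection rather than mere non-overlap of interiors. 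Once these are checked, the convex-combination argument is routine and the appeal to Theorem \ref{theorem: linearized collision avoidance constraint} closes the proof.
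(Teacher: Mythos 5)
Your proposal is correct and follows essentially the same route as the paper's proof: summing the two half-space inequalities, using the antisymmetry $\textbf{n}^{j,i}_{m}=-\textbf{n}^{i,j}_{m}$ and the margin condition (\ref{eq: relative safe flight corridor definition4}) to cancel the initial-trajectory terms and obtain $(\textbf{c}^{i}_{k,m,l}-\textbf{c}^{j}_{k,m,l})\cdot\textbf{n}^{i,j}_{m} > \max\langle\mathcal{C}^{i,j},\textbf{n}^{i,j}_{m}\rangle$, then extending to the convex hull by linearity and invoking Theorem \ref{theorem: linearized collision avoidance constraint}. Your attention to the strictness of the inequalities and the exact cancellation of the $\hat{\textbf{c}}$ cross terms matches the paper's derivation step for step.
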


\begin{proof}
We obtain the following by adding the inequality in (\ref{eq: relative safe flight corridor definition2}) for each agent $i$ and $j$:

\begin{equation}
  (\textbf{c}^{i}_{k,m,l}-\hat{\textbf{c}}^{i}_{k,m,l}) \cdot \textbf{n}^{i,j}_{m} + (\textbf{c}^{j}_{k,m,l}-\hat{\textbf{c}}^{j}_{k,m,l}) \cdot \textbf{n}^{j,i}_{m} 
  - (d^{i,j}_{m,l} + d^{j,i}_{m,l}) > 0
\label{eq: relative safe flight corridor safety0}
\end{equation}

This can be simplified using (\ref{eq: relative safe flight corridor definition3}) and (\ref{eq: relative safe flight corridor definition4}):
\begin{equation}
  (\textbf{c}^{i}_{k,m,l}-\textbf{c}^{j}_{k,m,l}) \cdot \textbf{n}^{i,j}_{m} + (\hat{\textbf{c}}^{i}_{k,m,l}-\hat{\textbf{c}}^{j}_{k,m,l}) \cdot \textbf{n}^{i,j}_{m} 
  - (d^{i,j}_{m,l} + d^{j,i}_{m,l}) > 0
\label{eq: relative safe flight corridor safety1}
\end{equation}
\begin{equation}
  (\textbf{c}^{i}_{k,m,l}-\textbf{c}^{j}_{k,m,l}) \cdot \textbf{n}^{i,j}_{m} > \max \langle \mathcal{C}^{i,j}, \textbf{n}^{i,j}_{m} \rangle
\label{eq: relative safe flight corridor safety2}
\end{equation}
The above inequality satisfies for $\forall l$, thus we have the following for any $\lambda_{\forall l} \geq 0$ s.t. $\sum_{l=0}^{n}\lambda_{l}=1$:
\begin{equation}
  \sum_{l=0}^{n}\lambda_{l}(\textbf{c}^{i}_{k,m,l}-\textbf{c}^{j}_{k,m,l}) \cdot \textbf{n}^{i,j}_{m} > \max \langle \mathcal{C}^{i,j}, \textbf{n}^{i,j}_{m} \rangle
\label{eq: relative safe flight corridor safety3}
\end{equation}
\begin{equation}
  \mathcal{H}^{i,j}_{k,m} \cap  \mathcal{C}^{i,j} = \emptyset
\label{eq: relative safe flight corridor safety4}
\end{equation}
Therefore, there is no collision between the agents $i$ and $j$ by Thm. \ref{theorem: linearized collision avoidance constraint}.
\end{proof}

\begin{lemma}
(Existence and Feasibility of LSC) If $\hat{\mathcal{H}}^{i,j}_{k,m} \cap \mathcal{C}^{i,j} = \emptyset$ for $\forall i,j \in \mathcal{I}, m, l$, then there exists $\mathcal{L}^{i,j}_{k,m,l}$ that satisfies the definition of LSC and $\hat{\textbf{c}}^{i}_{k,m,l} \in \mathcal{L}^{i,j}_{k,m,l}$ for $\forall i,j \in \mathcal{I}, m, l$.
\label{lemma: existence and feasibility of relative safe flight corridor}
\end{lemma}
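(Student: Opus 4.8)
The plan is to prove the two assertions separately: first that a valid normal vector and safety margins exist (so that (\ref{eq: relative safe flight corridor definition2})--(\ref{eq: relative safe flight corridor definition4}) can all be satisfied simultaneously), and then that the relative initial control points lie inside the resulting half-spaces. The construction proceeds in two stages: fix the normal $\textbf{n}^{i,j}_{m}$ by separating the two disjoint convex sets supplied in the hypothesis, and then choose the margins $d^{i,j}_{m,l}$ and $d^{j,i}_{m,l}$ symmetrically so that both (\ref{eq: relative safe flight corridor definition4}) and the feasibility inequality follow from the same strict separation estimate.

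For the normal, I would invoke the separating hyperplane theorem. By hypothesis $\hat{\mathcal{H}}^{i,j}_{k,m} \cap \mathcal{C}^{i,j} = \emptyset$, and both sets are compact and convex ($\hat{\mathcal{H}}^{i,j}_{k,m}$ is the convex hull of finitely many control points and $\mathcal{C}^{i,j}$ is a closed ellipsoid), so they can be \emph{strictly} separated. I would orient the separating direction $\textbf{n}^{i,j}_{m}$ to point from $\mathcal{C}^{i,j}$ toward $\hat{\mathcal{H}}^{i,j}_{k,m}$, which yields $\textbf{x} \cdot \textbf{n}^{i,j}_{m} > \max\langle \mathcal{C}^{i,j}, \textbf{n}^{i,j}_{m}\rangle$ for every $\textbf{x} \in \hat{\mathcal{H}}^{i,j}_{k,m}$. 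Since each relative control point $\hat{\textbf{c}}^{i}_{k,m,l} - \hat{\textbf{c}}^{j}_{k,m,l}$ belongs to $\hat{\mathcal{H}}^{i,j}_{k,m}$, this gives the key strict estimate $(\hat{\textbf{c}}^{i}_{k,m,l} - \hat{\textbf{c}}^{j}_{k,m,l}) \cdot \textbf{n}^{i,j}_{m} > \max\langle \mathcal{C}^{i,j}, \textbf{n}^{i,j}_{m}\rangle$ for all $l$. Note that a single normal per segment $m$ serves all $l$, since $\hat{\mathcal{H}}^{i,j}_{k,m}$ does not depend on $l$. Setting $\textbf{n}^{j,i}_{m} = -\textbf{n}^{i,j}_{m}$ then satisfies (\ref{eq: relative safe flight corridor definition3}) by construction.

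For the margins, I would set each of $d^{i,j}_{m,l}$ and $d^{j,i}_{m,l}$ to the common midpoint $\tfrac{1}{2}\bigl(\max\langle \mathcal{C}^{i,j}, \textbf{n}^{i,j}_{m}\rangle + (\hat{\textbf{c}}^{i}_{k,m,l} - \hat{\textbf{c}}^{j}_{k,m,l}) \cdot \textbf{n}^{i,j}_{m}\bigr)$. The small fact I need here is that this definition is symmetric under swapping $i$ and $j$: using $\mathcal{C}^{j,i} = -\mathcal{C}^{i,j}$ and $\textbf{n}^{j,i}_{m} = -\textbf{n}^{i,j}_{m}$, one checks that $\max\langle \mathcal{C}^{j,i}, \textbf{n}^{j,i}_{m}\rangle = \max\langle \mathcal{C}^{i,j}, \textbf{n}^{i,j}_{m}\rangle$ and that $(\hat{\textbf{c}}^{j}_{k,m,l} - \hat{\textbf{c}}^{i}_{k,m,l}) \cdot \textbf{n}^{j,i}_{m} = (\hat{\textbf{c}}^{i}_{k,m,l} - \hat{\textbf{c}}^{j}_{k,m,l}) \cdot \textbf{n}^{i,j}_{m}$, so $d^{j,i}_{m,l} = d^{i,j}_{m,l}$. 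Their sum then equals the right-hand side of (\ref{eq: relative safe flight corridor definition4}) exactly, so the margin condition holds. Finally, feasibility reduces to $(\hat{\textbf{c}}^{i}_{k,m,l} - \hat{\textbf{c}}^{j}_{k,m,l}) \cdot \textbf{n}^{i,j}_{m} - d^{i,j}_{m,l} = \tfrac{1}{2}\bigl((\hat{\textbf{c}}^{i}_{k,m,l} - \hat{\textbf{c}}^{j}_{k,m,l}) \cdot \textbf{n}^{i,j}_{m} - \max\langle \mathcal{C}^{i,j}, \textbf{n}^{i,j}_{m}\rangle\bigr) > 0$, which is precisely the strict separation estimate from the previous step, giving $\hat{\textbf{c}}^{i}_{k,m,l} \in \mathcal{L}^{i,j}_{k,m,l}$.

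I expect the main obstacle to be bookkeeping rather than depth: making sure the separation is \emph{strict} (which is why compactness of both sets matters and is what ultimately powers the strict feasibility inequality), and guaranteeing that the per-pair choice of $\textbf{n}^{i,j}_{m}$ is made consistently so that the antisymmetry (\ref{eq: relative safe flight corridor definition3}) never conflicts across the two agents, i.e. the normal is computed once for the ordered pair $(i,j)$ and negated for $(j,i)$. Since the hypothesis is assumed for every pair and index, the construction can be carried out independently for each $(i,j,m,l)$ without interference.
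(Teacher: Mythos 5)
Your proposal is correct and follows essentially the same route as the paper's proof: strict separation of the disjoint compact convex sets $\hat{\mathcal{H}}^{i,j}_{k,m}$ and $\mathcal{C}^{i,j}$ via the hyperplane separation theorem, the antisymmetric choice of normal, and the midpoint choice of margins, with feasibility falling out of the same strict inequality. Your explicit check that the midpoint margin is symmetric under swapping $i$ and $j$ (using $\mathcal{C}^{j,i}=-\mathcal{C}^{i,j}$) is a small point the paper leaves implicit, but it is not a different argument.
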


\begin{proof}
$\hat{\mathcal{H}}^{i,j}_{k,m}$ and $\mathcal{C}^{i,j}$ are disjoint compact convex sets. Thus, by the hyperplane seperation theorem \cite{boyd2004convex}, there exists $\textbf{n}_{s}$ such that:
\begin{equation}
  \min \langle \hat{\mathcal{H}}^{i,j}_{k,m}, \textbf{n}_{s} \rangle > \max \langle \mathcal{C}^{i,j}, \textbf{n}_{s} \rangle
\label{eq: relative safe flight corridor existence1}
\end{equation}
where $\min \langle \hat{\mathcal{H}}^{i,j}_{k,m}, \textbf{n}_{s} \rangle = \min_{\textbf{x} \in \hat{\mathcal{H}}^{i,j}_{k,m}} \textbf{x} \cdot \textbf{n}_{s}$.
Here, we set the normal vector and the safety margin of $\mathcal{L}^{i,j}_{k,m,l}$ as follows:
\begin{equation}
  \textbf{n}^{i,j}_{m} = -\textbf{n}^{j,i}_{m} = \textbf{n}_{s}
\label{eq: relative safe flight corridor existence2}
\end{equation}
\begin{equation}
  d^{i,j}_{m,l} = d^{j,i}_{m,l} = \frac{1}{2}(\max \langle \mathcal{C}^{i,j}, \textbf{n}^{i,j}_{m} \rangle + (\hat{\textbf{c}}^{i}_{k,m,l} - \hat{\textbf{c}}^{j}_{k,m,l}) \cdot \textbf{n}^{i,j}_{m})
\label{eq: relative safe flight corridor existence3}
\end{equation}
We can observe that they fulfill the definition of LSC. Furthermore, we have $(\hat{\textbf{c}}^{i}_{k,m,l} - \hat{\textbf{c}}^{j}_{k,m,l}) \cdot \textbf{n}^{i,j}_{m} - d^{i,j}_{m,l} = \frac{1}{2}((\hat{\textbf{c}}^{i}_{k,m,l} - \hat{\textbf{c}}^{j}_{k,m,l}) \cdot \textbf{n}^{i,j}_{m} - \max \langle \mathcal{C}^{i,j}, \textbf{n}^{i,j}_{m} \rangle) > 0$ due to (\ref{eq: relative safe flight corridor existence1}).
It indicates that $\hat{\textbf{c}}^{i}_{k,m,l} \in \mathcal{L}^{i,j}_{k,m,l}$ for $\forall i,j \in \mathcal{I}, m, l$.
\end{proof}

\begin{figure}[t]
\centering
\includegraphics[width = 0.8\linewidth]{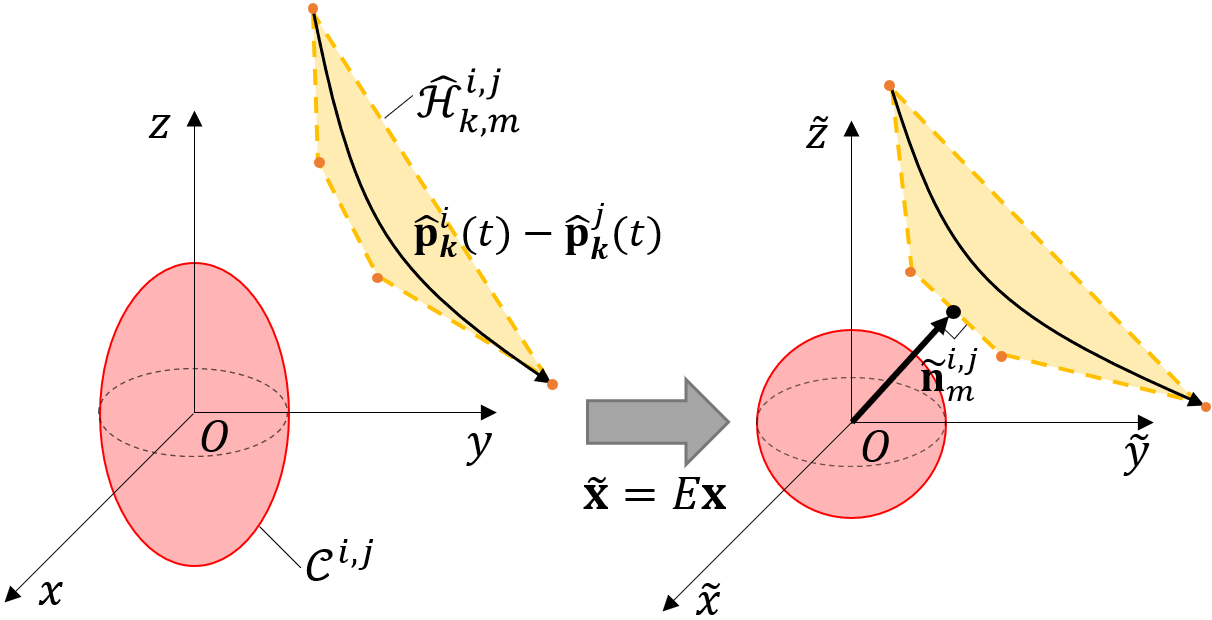}
\caption{
  Decision method of the normal vector of LSC. The red ellipsoid is the collision model between two agents, and the orange-shaded region is the convex hull that consists of the control points of the relative initial trajectory. 
}
\label{fig: LSC construction}
\end{figure}

We construct the LSC based on Lemma \ref{lemma: existence and feasibility of relative safe flight corridor}.
First, we perform the coordinate transformation to convert the collision model to the sphere as shown in Fig. \ref{fig: LSC construction}. 
Next, we conduct the GJK algorithm \cite{gilbert1988fast} to find the closest points between $\hat{\mathcal{H}}^{i,j}_{k,m}$ and the collision model.  
Then, we determine the normal vector utilizing the vector crossing the closest points and compute the safety margin using (\ref{eq: relative safe flight corridor existence3}).
Finally, we obtain the LSC by reversing the coordinate transform.

\begin{figure}[t]
    \centering
    \begin{subfigure}[t]{0.21\textwidth}
        \includegraphics[width=\textwidth]{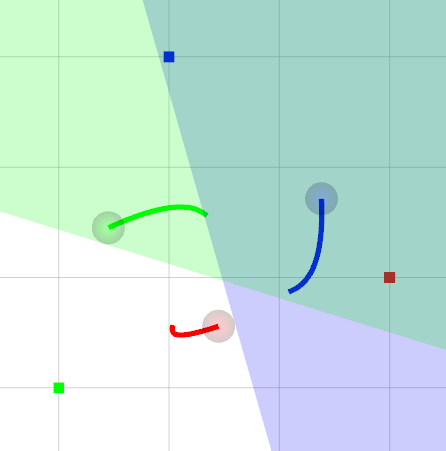}
        \caption{BVC \cite{zhou2017fast}}
        \label{fig: comparision bvc}
    \end{subfigure}
    ~ 
    \begin{subfigure}[t]{0.21\textwidth}
        \includegraphics[width=\textwidth]{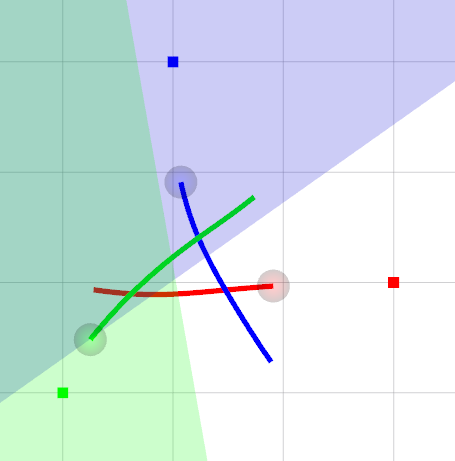}
        \caption{LSC}
        \label{fig: comparision lsc}
    \end{subfigure}
    \caption{
    Trajectory planning comparison between BVC \cite{zhou2017fast} and LSC. The colored lines, small squares and circles are desired trajectories, goal points and desired position at the end of planning horizon (i.e. $\textbf{p}^{i}_{k}(T_{k+M})$) respectively. The color-shaded regions denote the collision constraints for the red agent at the end of planning horizon.
    }
    \label{fig: comparision bvc vs lsc}
\end{figure}

We compare the LSC with the BVC \cite{zhou2017fast} in Fig. \ref{fig: comparision bvc vs lsc}. Since the BVC is generated using the agent's current position only, the desired trajectories from the BVC remain within each static cell, which leads to a conservative maneuver as shown in Fig. \ref{fig: comparision bvc}. 
 On the other hand, we construct the LSC utilizing the full trajectory at the previous step.
Thus, we can obtain more aggressive maneuver while ensuring collision avoidance, as depicted in Fig. \ref{fig: comparision lsc}.   


\begin{algorithm}
\SetAlgoLined
\KwIn{initial trajectories $\hat{\textbf{p}}^{i \in \mathcal{I}}_{k}(t)$, goal points $\textbf{g}^{i \in \mathcal{I}}$ and 3D occupancy map $\mathcal{W}$}
\KwOut{current goal point $\textbf{g}^{i}_{curr}$}
  $\mathcal{P} \gets \emptyset$\;
  \For{$\forall j \in \mathcal{I} \backslash i$}{
    \If{the agent $j$ satisfies (\ref{eq: goal planning condition1}), (\ref{eq: goal planning condition2}), (\ref{eq: goal planning condition3}) or $\|\hat{\textbf{p}}^{i}_{k}(T_{k}) - \textbf{g}^{i}\| < d_{g}$}{
      $\mathcal{P} = \mathcal{P} \cup \{j\}$\;
    }
  }
  $q \gets $ findClosestAgent($\mathcal{P}$)\;
  \eIf{$\|\hat{\textbf{p}}^{i}_{k}(T_{k}) - \hat{\textbf{p}}^{q}_{k}(T_{k}) \| < d_{th}$}{
     $\textbf{g}^{i}_{curr} \gets \hat{\textbf{p}}^{q}_{k}(T_{k}) + \frac{\hat{\textbf{p}}^{i}_{k}(T_{k}) - \hat{\textbf{p}}^{q}_{k}(T_{k})}{\|\hat{\textbf{p}}^{i}_{k}(T_{k}) - \hat{\textbf{p}}^{q}_{k}(T_{k})\|} d_{rep}$ 
  }
  {
    $\mathcal{D} \gets$ gridBasedPlanner($\hat{\textbf{p}}^{i}_{k}(T_{k}), \mathcal{P}, \mathcal{W}$)\;
    $\textbf{g}^{i}_{curr} \gets$ findLOSFreeGoal($\hat{\textbf{p}}^{i}_{k}(T_{k}), \mathcal{D}, \mathcal{P}, \mathcal{W}$)\;
  }
  \KwRet{$\textbf{g}^{i}_{curr}$}
\caption{planCurrentGoal}
\label{alg: goal planning algorithm}
\end{algorithm}

\subsection{Goal Planning}
\label{subsec: goal planning}
In this work, we adopt the priority-based goal planning to prevent the deadlock in cluttered environments.
Alg. \ref{alg: goal planning algorithm} describes the proposed goal planning method.
We give the higher priority to the agent that has a smaller distance to the goal (\ref{eq: goal planning condition1}). 
However, we assign the lowest priority to the goal-reached agent to prevent them from blocking other agent's path (\ref{eq: goal planning condition2}).
If an agent moves away from the current agent, then we do not consider the priority of that agent for smooth traffic flow (\ref{eq: goal planning condition3})  (lines 1-6).
\begin{equation}
  \|\hat{\textbf{p}}^{j}_{k}(T_{k}) - \textbf{g}^{j}\| < \|\hat{\textbf{p}}^{i}_{k}(T_{k}) - \textbf{g}^{i}\|
\label{eq: goal planning condition1}
\end{equation}
\begin{equation}
  \|\hat{\textbf{p}}^{j}_{k}(T_{k}) - \textbf{g}^{j}\| > d_{g}
\label{eq: goal planning condition2}
\end{equation}
\begin{equation}
  (\hat{\textbf{p}}^{j}_{k}(T_{k+M}) - \hat{\textbf{p}}^{j}_{k}(T_{k})) \cdot (\hat{\textbf{p}}^{i}_{k}(T_{k}) - \hat{\textbf{p}}^{j}_{k}(T_{k})) > 0
\label{eq: goal planning condition3}
\end{equation}
Next, we find the nearest agent among the higher priority agents. If the distance to the nearest agent is less than the threshold $d_{th}$, then we assign the goal to increase the distance to the closest agent (lines 7-9). It is necessary to prevent agents from clumping together. If there is no such agent, we search a discrete path $\mathcal{D}$ using a grid-based path planner. Here, the higher-priority agent is considered as a static obstacle. If the planner fails to find a solution, then we retry to find the path without considering the higher-priority agent (line 11).
Finally, we find the \textit{LOS-free} goal in the discrete path (line 12). The \textit{LOS-free} goal means that the line segment between the goal and the agent's current position does not collide with higher priority agents and static obstacles.

\begin{figure}[t]
    \centering
    \begin{subfigure}[t]{0.23\textwidth}
        \includegraphics[width=\textwidth]{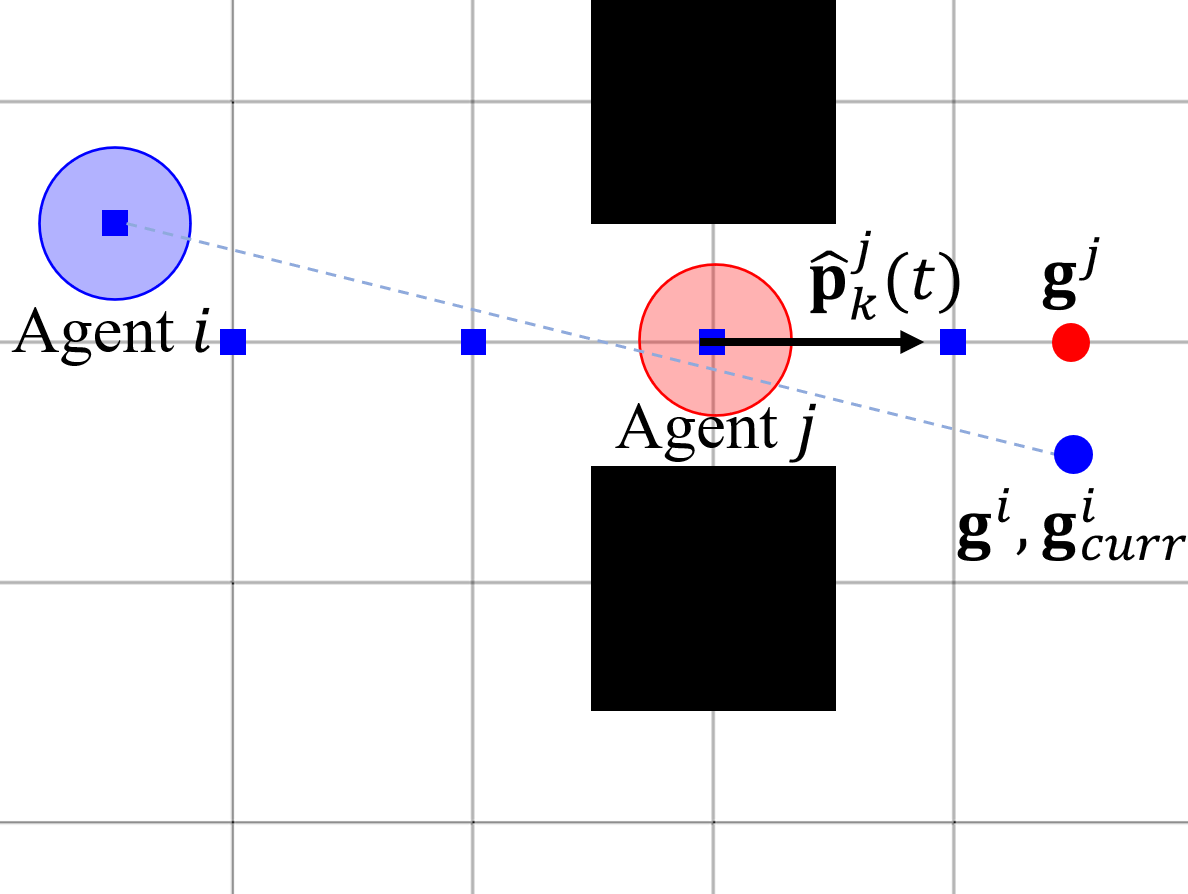}
        \label{fig: goal planning1}
    \end{subfigure}
    ~ 
    \begin{subfigure}[t]{0.23\textwidth}
        \includegraphics[width=\textwidth]{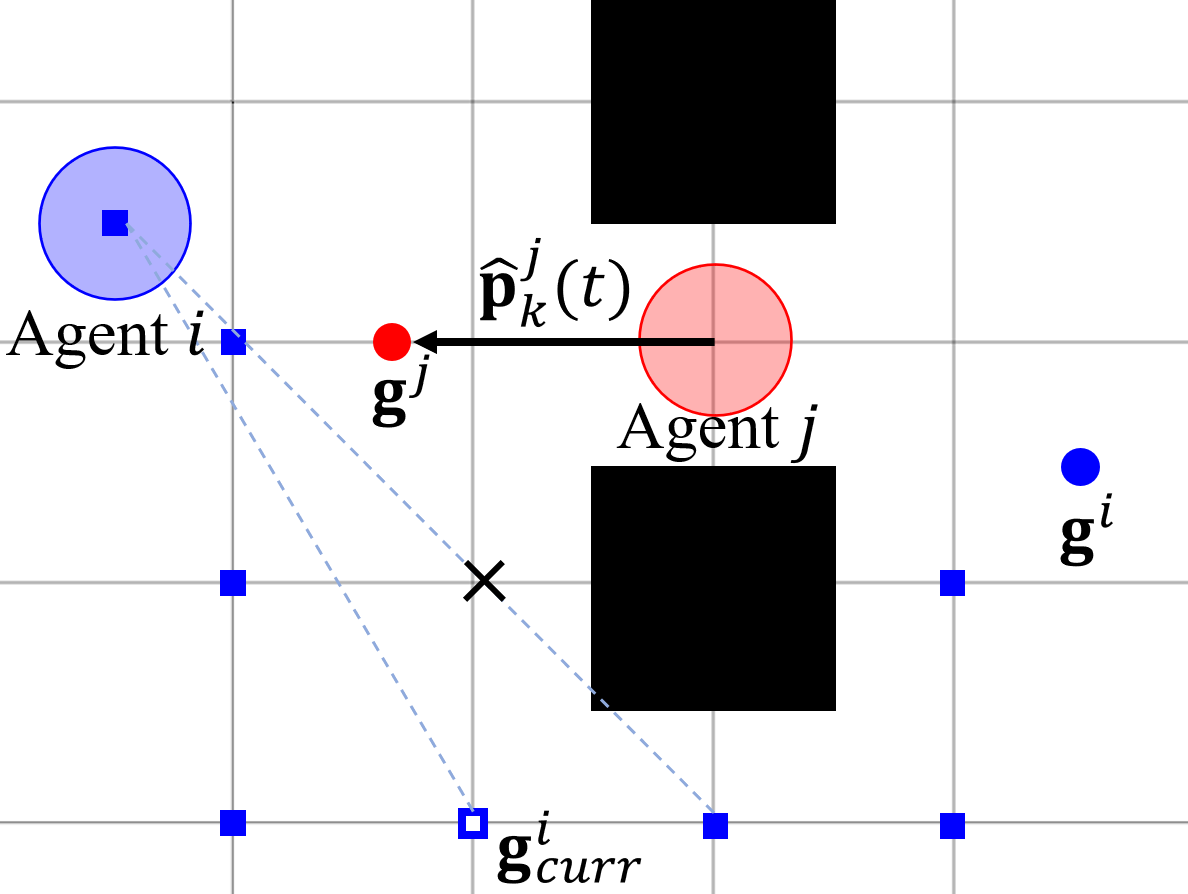}
        \label{fig: goal planning2}
    \end{subfigure}
    \caption{
    Goal planning depending on the direction of the initial trajectory. The blue and red circles are the current position of agents $i$ and $j$ respectively. The small circles are the goal points of each agent and blue small squares are the discrete path of agent $i$ and black squares are static obstacles.
    }
    \label{fig: goal planning}
\end{figure}

Fig. \ref{fig: goal planning} describes the mechanism of the Alg. \ref{alg: goal planning algorithm}.
In the left figure, the agent $j$ has lower priority because it is moving away from the agent $i$.
Due to that, the agent $i$ ignores the agent $j$ when planning the discrete path. 
On the other hand, the agent $j$ in the right figure has higher priority because it moves to the agent $i$ and has a smaller distance to the goal.
Thus, the agent $i$ finds the LOS-free goal in the discrete path that detours the agent $j$ and static obstacles.

\subsection{Trajectory Optimization}
\label{subsec: trajectory optimization}
We reformulate the distributed trajectory optimization problem as a quadratic programming (QP) problem using the SFC and LSC:
\begin{equation}
\begin{aligned}
& \underset{\textbf{c}^{i}_{k}}{\text{minimize}}     & & J^{i}_{err} + J^{i}_{der} \\
& \text{subject to}   & & A_{eq}\textbf{c}^{i}_{k} = \textbf{b}_{eq} \\
&                     & & A_{dyn}\textbf{c}^{i}_{k} \preceq \textbf{b}_{dyn} \\
&                     & & \textbf{c}^{i}_{k,m,l} \in \mathcal{S}^{i}_{k,m} & & \forall j \in \mathcal{I} \backslash i, m, l \\
&                     & & \textbf{c}^{i}_{k,m,l} \in \mathcal{L}^{i,j}_{k,m,l} & & \forall j \in \mathcal{I} \backslash i, m, l \\
&                     & & \textbf{c}^{i}_{k,M,0} = \textbf{c}^{i}_{k,M,l} & & \forall l>0
\end{aligned}
\label{eq: trajectory optimization}
\end{equation}
Here, we add the last constraint to guarantee dynamic feasibility of the initial trajectory. Note that the constraints in (\ref{eq: trajectory optimization}) guarantees collision-free due to the SFC and Lemma \ref{lemma: safety of relative safe flight corridor}.
Furthermore, Thm. \ref{theorem: feasibility of optimization problem} shows that this problem consists of feasible constraints.
Therefore, we can obtain a feasible solution from Alg. \ref{alg: trajectory planning algorithm} for all replanning steps by using a conventional convex solver.

\begin{theorem}
Suppose that there is no collision in the initial configuration and the replanning period is equal to the segment time $\Delta t$. Then, the solution of (\ref{eq: trajectory optimization}) always exists for all replanning steps.
\label{theorem: feasibility of optimization problem}
\end{theorem}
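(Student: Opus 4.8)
The plan is to prove the claim by induction on the replanning step $k$, showing at each step that the initial trajectory $\hat{\textbf{p}}^{i}_{k}(t)$ defined in (\ref{eq: initial trajectory}) is itself a feasible point of (\ref{eq: trajectory optimization}); since the feasible set is then nonempty, a convex solver returns a solution. The inductive hypothesis I would carry forward is a joint statement over all agents: at step $k-1$ every agent possesses a trajectory whose control points satisfy the equality, dynamic, SFC, LSC, and terminal-stopping constraints, so that by Lemma \ref{lemma: safety of relative safe flight corridor} the trajectories are pairwise collision-free in the sense $\mathcal{H}^{i,j}_{k-1,m} \cap \mathcal{C}^{i,j} = \emptyset$ for all pairs and segments. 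This coupling across agents is essential because the LSC construction relies on all relative trajectories simultaneously.

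For the base case $k=0$ the initial trajectory holds the collision-free start position constant, so all control points coincide. I would check that this stationary trajectory trivially meets the terminal-stopping constraint, the zero-derivative dynamic limits, and the initial/continuity equalities; that Lemma \ref{lemma: feasibility of safe flight corridor} places its control points in $\mathcal{S}^{i}_{0,m}$; and that the collision-free initial configuration gives $\textbf{s}^{i}-\textbf{s}^{j} \notin \mathcal{C}^{i,j}$, so each single-point relative hull is disjoint from $\mathcal{C}^{i,j}$ and Lemma \ref{lemma: existence and feasibility of relative safe flight corridor} supplies an LSC containing $\hat{\textbf{c}}^{i}_{0,m,l}$.

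For the inductive step I would verify the five constraint families in turn for the shifted trajectory. The equality and dynamic constraints follow because the first $M-1$ segments are a relabeling of the previous feasible trajectory and the appended $M$-th segment is stationary; here the key observation is that the terminal-stopping constraint $\textbf{c}^{i}_{k-1,M,0}=\textbf{c}^{i}_{k-1,M,l}$ held at step $k-1$, so the previous trajectory ended at rest and the constant extension preserves $C^2$ continuity and the velocity/acceleration bounds. The SFC membership is exactly Lemma \ref{lemma: feasibility of safe flight corridor}, and the terminal-stopping constraint for $\hat{\textbf{p}}^{i}_{k}$ is immediate from (\ref{eq: initial trajectory}).

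The main obstacle is the inter-agent (LSC) constraint, since Lemma \ref{lemma: existence and feasibility of relative safe flight corridor} requires that \emph{every} relative initial hull avoid the collision model, $\hat{\mathcal{H}}^{i,j}_{k,m} \cap \mathcal{C}^{i,j} = \emptyset$. I would establish this by splitting on the segment index: for $m<M$ the identity $\hat{\textbf{c}}^{i}_{k,m,l}-\hat{\textbf{c}}^{j}_{k,m,l}=\textbf{c}^{i}_{k-1,m+1,l}-\textbf{c}^{j}_{k-1,m+1,l}$ gives $\hat{\mathcal{H}}^{i,j}_{k,m}=\mathcal{H}^{i,j}_{k-1,m+1}$, which is disjoint from $\mathcal{C}^{i,j}$ by the inductive hypothesis; for $m=M$ the relative hull collapses to the single point $\textbf{c}^{i}_{k-1,M,n}-\textbf{c}^{j}_{k-1,M,n}$, a vertex of $\mathcal{H}^{i,j}_{k-1,M}$, hence also outside $\mathcal{C}^{i,j}$. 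With this disjointness in hand, Lemma \ref{lemma: existence and feasibility of relative safe flight corridor} guarantees an LSC for which $\hat{\textbf{c}}^{i}_{k,m,l}\in\mathcal{L}^{i,j}_{k,m,l}$, completing the verification that $\hat{\textbf{p}}^{i}_{k}$ is feasible and closing the induction.
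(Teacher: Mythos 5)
Your proposal is correct and follows essentially the same route as the paper's proof: induction on $k$ showing the shifted initial trajectory is feasible, with the base case from the collision-free start, SFC membership from Lemma \ref{lemma: feasibility of safe flight corridor}, and the LSC feasibility from the identities $\hat{\mathcal{H}}^{i,j}_{k,m}=\mathcal{H}^{i,j}_{k-1,m+1}$ for $m<M$ and the collapse to a vertex of $\mathcal{H}^{i,j}_{k-1,M}$ for $m=M$, combined with Lemmas \ref{lemma: safety of relative safe flight corridor} and \ref{lemma: existence and feasibility of relative safe flight corridor}. Your treatment is in fact slightly more explicit than the paper's in noting that the terminal-stopping constraint at step $k-1$ is what makes the appended stationary segment respect the continuity and dynamic limits.
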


\begin{proof}
If $k=0$, then we obtain $\hat{\mathcal{H}}^{i,j}_{0,m} \cap \mathcal{C}^{i,j} = \emptyset$ for $\forall j \in \mathcal{I} \backslash i,m$ by the assumption. Hence the control points of the initial trajectory $\hat{\textbf{c}}^{i}_{0}$ are the solution to (\ref{eq: trajectory optimization}) by Lemmas \ref{lemma: feasibility of safe flight corridor}, \ref{lemma: existence and feasibility of relative safe flight corridor}.

If there exists a solution at the previous replanning step $k-1$, then the initial trajectory satisfies $A_{eq}\hat{\textbf{c}}^{i}_{k} = \textbf{b}_{eq}$, $A_{dyn}\hat{\textbf{c}}^{i}_{k} \preceq \textbf{b}_{dyn}$, $\hat{\textbf{c}}^{i}_{k,m,l} \in \mathcal{S}^{i}_{k,m}$ and $\hat{\textbf{c}}^{i}_{k,M,0} = \hat{\textbf{c}}^{i}_{k,M,l}$ for $\forall j \in \mathcal{I} \backslash i, m,l$ due to (\ref{eq: initial trajectory}) and Lemma \ref{lemma: feasibility of safe flight corridor}.
Also, we have $\mathcal{H}^{i,j}_{k-1,m} \cap  \mathcal{C}^{i,j} = \emptyset$ for $\forall j \in \mathcal{I} \backslash i, m$ by Lemma \ref{lemma: safety of relative safe flight corridor}.
Here, we can derive that $\hat{\mathcal{H}}^{i,j}_{k,m} \cap \mathcal{C}^{i,j} = \emptyset$ for $\forall j \in \mathcal{I} \backslash i, m$ using the fact that $\hat{\mathcal{H}}^{i,j}_{k,m} = \mathcal{H}^{i,j}_{k-1,m+1}$ for $\forall m < M$ and $\hat{\mathcal{H}}^{i,j}_{k,M} = \{\textbf{c}^{i}_{k-1,M,n}\} \in \mathcal{H}^{i,j}_{k-1,M}$. Thus, we obtain $\hat{\textbf{c}}^{i}_{k,m,l} \in \mathcal{L}^{i,j}_{k,m,l}$ by Lemma \ref{lemma: existence and feasibility of relative safe flight corridor}. To summarize, the control points of the initial trajectory $\hat{\textbf{c}}^{i}_{k}$ satisfies all constraints of (\ref{eq: trajectory optimization}).
Therefore, the solution of (\ref{eq: trajectory optimization}) always exists for all replanning steps by the mathematical induction.
\end{proof}

If we conduct the mission with actual quadrotors, the assumption used in Thm. \ref{theorem: feasibility of optimization problem} may not be satisfied due to external disturbance or tracking error. To solve this problem, we adopt \textit{event-triggered replanning} proposed in \cite{luis2020online}. This method ignores the tracking error when the error is small. In other word, it utilizes the desired state from the previous trajectory ($\textbf{p}^{i}_{k-1}(T_{k})$) as the initial condition of the current optimization problem (\ref{eq: equality constraints}).
If the detected disturbance is too large, we replace the planner with our previous work \cite{park2020online} until the error vanished.

\section{EXPERIMENTS}
\label{sec: experiments}
All simulation and experiments were conducted on a laptop running Ubuntu 18.04. with Intel Core i7-9750H @ 2.60GHz CPU and 32G RAM. All methods used for comparison were implemented in C++ except for DMPC tested in MATLAB R2020a.
We modeled the quadrotor with radius $r^{\forall i} = 0.15$ m, downwash coefficient $c_{dw} = 2$, maximum velocity $[1.0, 1.0, 1.0]$ m/s, maximum acceleration $[2.0, 2.0, 2.0]$ m/$\text{s}^2$ based on the experimental result with Crazyflie 2.0.
We use the Octomap \cite{hornung2013octomap} to represent the obstacle environment.
We formulated the piecewise Bernstein polynomial with the degree of polynomials $n=5$, the number of segments $M=5$ and the segment time $\Delta t = 0.2$ s, where the total time horizon is 1 s.
We set the parameters for the objective function and Alg. \ref{alg: goal planning algorithm} to $w_{e,m} = 1$, $w_{d,3}=0.01$, $d_{g} = 0.1$ m, $d_{th} = 0.4$ m and $d_{rep} = 0.5$ m.
We used the openGJK package \cite{montanari2018opengjk} for GJK algorithm and the CPLEX QP solver \cite{cplex201612} for trajectory optimization.
When we execute the proposed algorithm, we set the interval between trajectory update to $\Delta t = 0.2$ s to match the assumption in Thm. \ref{theorem: feasibility of optimization problem}. It means that all agents begin to plan the trajectories at the same time and update it with the rate of 5 Hz.

\begin{figure}[t]
    \centering
    \includegraphics[width = 0.48\textwidth]{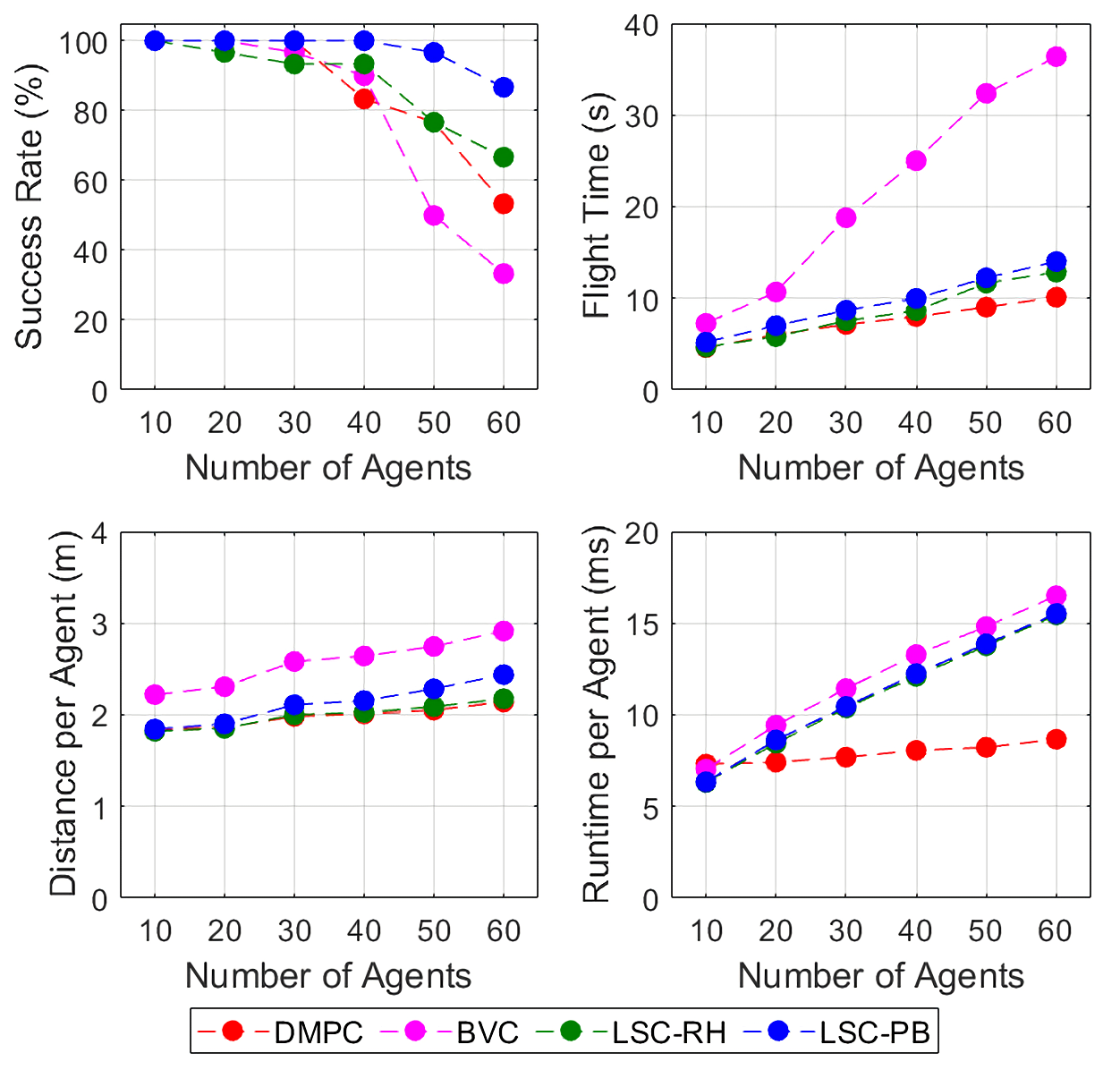}
    \caption{
        Performance comparison in a 3 m $\times$ 3 m $\times$ 2 m empty space. We averaged the value from success cases among 30 different random trials (shown best in color).
    }
    \label{fig:simulation1}
\end{figure}

\subsection{Simulation in Obstacle-free Space}
To validate the performance of the proposed method, we compare four online trajectory planning algorithms at the obstacle-free space:
1) DMPC (input space) \cite{luis2020online},
2) BVC \cite{zhou2017fast} with right-hand rule,
3) LSC with right-hand rule (LSC-RH),
4) LSC with priority-based goal planning (LSC-PB).
We conducted the simulation with 10 to 60 quadrotors in a 3 m $\times$ 3 m $\times$ 2 m empty space. We executed 30 missions with randomly allocated start and goal points for each number of agents.
When judging the collision, we use a smaller model with $r^{\forall i} = 0.1$ m and $c_{dw} = 2.25$ for DMPC because it uses soft constraints, while the other methods use the original one.

The top-left graph of Fig. \ref{fig:simulation1} describes the success rate of each method.
We observed that the collision occurred in all failure cases of DMPC even though we adopted the smaller collision model. It implies that collision constraint including the slack variable cannot guarantees the safety in a dense environment even with the bigger safety margin.
On the contrary, there was no collision when we use BVC and LSC-based methods, and the deadlock was the only reason for failure. It indicates that the proposed method does not cause optimization failure as guaranteed by Thm. \ref{theorem: feasibility of optimization problem}.
Among the four algorithms, the LSC with priority-based goal planning method shows the highest success rate for all cases. 
This result shows that priority-based goal planning can solve deadlock better than the right-hand rule.  

The top-right and bottom-left figures of Fig. \ref{fig:simulation1} show the total flight time and flight distance per agent, respectively.
The proposed method has $50\%$ less flight time and $17 \%$ less flight distance on average compared to the BVC-based method. Such performance is similar to DMPC, but the proposed algorithm still guarantees collision avoidance.
Compared to the right-hand rule, the priority-based goal planning costs $11\%$ more flight time and $6\%$ more flight distance, but it is acceptable considering the increase in success rate. 

The bottom-right graph of Fig. \ref{fig:simulation1} describes the average computation time per agent.
The proposed method takes 10.5 ms for 30 agents and 15.5 ms for 60 agents.
The proposed algorithm consumes similar computation time to BVC, but needs more computation time than DMPC because it involves more constraints to guarantee collision avoidance.

\begin{figure}
    \centering
    \begin{subfigure}[t]{0.23\textwidth}
        \includegraphics[width=\textwidth]{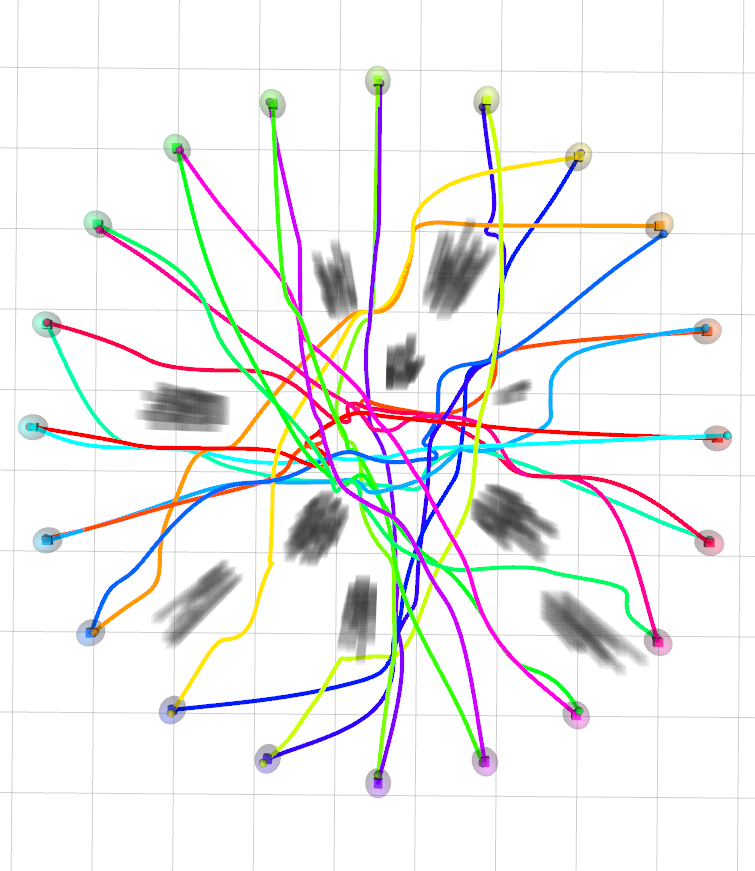}
        \caption{Random forest}
        \label{fig: random forest}
    \end{subfigure}
    ~ 
    \begin{subfigure}[t]{0.20\textwidth}
        \includegraphics[width=\textwidth]{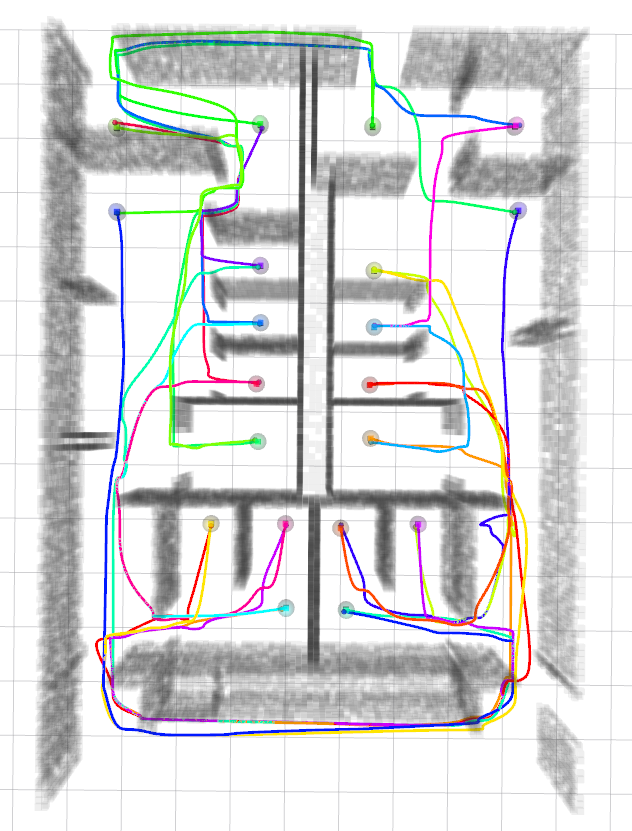}
        \caption{Indoor space}
        \label{fig: indoor space}
    \end{subfigure}
    \caption{Trajectory planning result of the proposed method in obstacle environments. Ellipsoid and line denote the agent and its trajectory respectively, and the gray block is obstacle.}
    \label{fig: trajectory planning result}
\end{figure}

\begin{table}[t!]
\caption{Comparison with EGO-Swarm \cite{zhou2020ego} in cluttered environment. ($sr$: success rate ($\%$), $T_{f}$: flight time (s), $l$: flight distance per agent (m), $T_{c}$: computation time (ms))} \vspace{-3mm}
\label{table: cluttered environment}
\begin{center}
\begin{tabularx}{\linewidth}{c|c|*{4}{X}}
\toprule
Env. & Method & \centering $sr$ & \centering $T_{f}$ & \centering $l$ & \centering $T_{c}$ \tabularnewline
\hline 
\multirow{2}{*}{Forest (20 agents)} &  EGO-Swarm & \centering 50 &  \centering 18.2  & \centering 8.69 & \centering 4.7  \tabularnewline
& LSC-PB & \centering 100 & \centering 18.4 & \centering 8.93 & \centering 9.1  \tabularnewline
\hline  
\multirow{2}{*}{Indoor (20 agents)}  & EGO-Swarm & \centering 0  & \centering -  & \centering - & \centering 53.0 \tabularnewline
& LSC-PB & \centering 100  & \centering 42.0  & \centering 14.6  & \centering 9.2  \tabularnewline
\hline
\end{tabularx}
\end{center}
\vspace{-2mm}
\end{table}

\subsection{Simulation in Cluttered Environments}
We compared the proposed algorithm with the EGO-Swarm \cite{zhou2020ego} in two types of obstacle environment. 
One is a random forest consisting of 10 static obstacles. We deployed 20 agents in a circle with 4 m radius and 1 m height, and we assigned the goal points to antipodes of the start points, as shown in Fig. \ref{fig: random forest}.
The other is an indoor space with the dimension 10 m $\times$ 15 m $\times$ 2.5 m. 
For a fair comparison, we increased the sensor range of EGO-Swarm enough to recognize all obstacles.
We randomly assigned the start and goal points among the 20 predetermined points as depicted in Fig. \ref{fig: indoor space}. We executed 30 simulations for each environment changing the obstacle's position (random forest) or the start and goal points (indoor space). 

Table \ref{table: cluttered environment} shows the planning result of both methods in cluttered environments. 
Ego-Swarm shows faster computation speed in the random forest because it optimizes the trajectory without any hard constraints. However, the success rate of this method is $50\%$ success rate, indicating that EGO-Swarm cannot guarantee collision avoidance.
On the other hand, the proposed method shows a perfect success rate and has similar flight time and distance compared to EGO-Swarm.
This result shows that the LSC is as efficient as the soft constraint of EGO-Swarm but still guarantees collision avoidance in obstacle environments.

Unlike the random forest, EGO-Swarm fails to find the trajectory to the goal points in the indoor space. It is because EGO-Swarm cannot find the topological trajectory detouring the wall-shaped obstacle.
On the other hand, the proposed method reaches the goal for all cases due to the priority-based goal planning.

\subsection{Flight Test}
We conducted the flight test with ten Crazyflie 2.0 quadrotors in the maze-like environment depicted in Fig. \ref{fig: flight test}. 
We confined the quadrotors in 10 m x 6 m x 1 m space to prevent them from bypassing the obstacle region completely.
We assign the quadrotors a mission to patrol the two waypoints to validate the operability of the proposed algorithm.
We sequentially computed the trajectories for all agents in a single laptop, and we used the Crazyswarm \cite{preiss2017crazyswarm} to broadcast the control command to quadrotors.
We utilized the Optitrack motion capture system to observe the agent's position at 120 Hz.
The full flight is presented in the supplemental video.

The flight test was conducted for around 70 seconds, and there was no huge disturbance triggering the planner change. 
We accumulated the history of the smallest inter-agent distance and agent-to-obstacle distance in Fig. \ref{fig: hist_agent} and \ref{fig: hist_obs} respectively.
Fig. \ref{fig: hist_agent} shows that the agents invade desired safe distance for some sample times due to the tracking error. However, there was no physical collision during the entire flight.
The computation time per agent is reported in Fig. \ref{fig: loop_time}, and the average computation time was 6.8 ms.

\begin{figure*}
    \centering
    \begin{subfigure}[t]{0.30\textwidth}
        \centering
        \includegraphics[width=\textwidth]{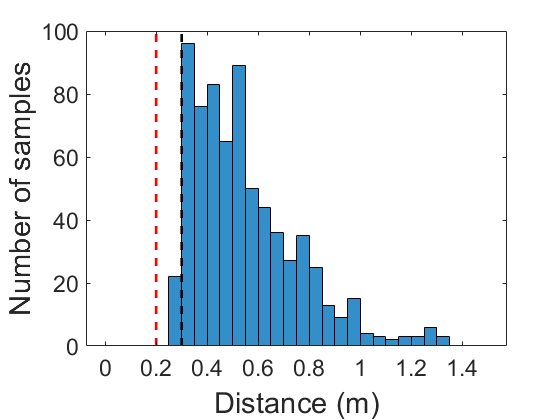}
        \caption{Distance between quadrotors}
        \label{fig: hist_agent}
    \end{subfigure}
    ~ 
    \begin{subfigure}[t]{0.30\textwidth}
        \centering
        \includegraphics[width=\textwidth]{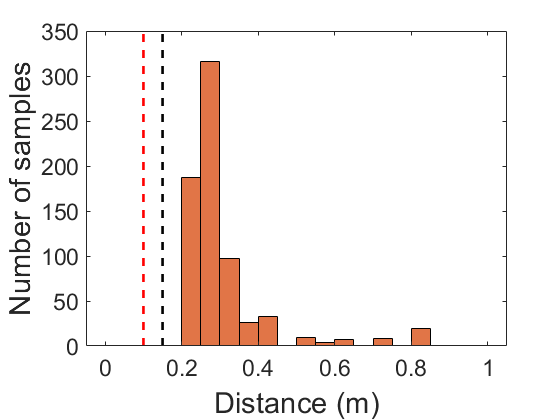}
        \caption{Distance to static obstacles}
        \label{fig: hist_obs}
    \end{subfigure}%
    ~ 
    \begin{subfigure}[t]{0.30\textwidth}
        \centering
        \includegraphics[width=\textwidth]{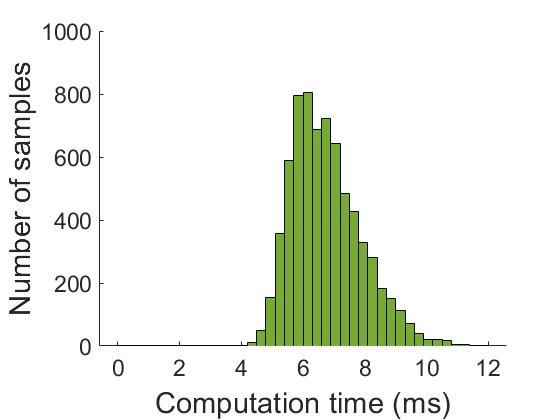}
        \caption{Computation time per agent}
        \label{fig: loop_time}
    \end{subfigure}
    \caption{
     Summary of the flight test with 10 quadrotors in maze-like environment. The red and black dashed lines denote the physical and desired safe distance respectively.
    }
    \label{fig: experiment result}
\end{figure*}

\newpage
\section{CONCLUSIONS}
\label{sec: conclusions}
We presented the online distributed trajectory planning algorithm for quadrotor swarm that guarantees to return collision-free, and dynamically feasible trajectory in cluttered environments.
We constructed the collision avoidance constraints using the linear safe corridor so that the feasible set of the constraints always contains the initial trajectory. 
We utilized these constraints to reformulate the trajectory generation problem as convex optimization, and we proved that this problem ensures feasibility for all replanning steps.
We verified that the prioritized-based goal planning solves deadlock better than the right-hand rule and EGO-Swarm. 
The proposed method shows the highest success rate compared to the state-of-the-art baselines and yields $50\%$ less flight time and $17\%$ less flight distance compared to the BVC-based method. 
We validated the operability of our algorithm by intensive flight tests in a maze-like environment.


\bibliographystyle{./bibtex/IEEEtran}
\bibliography{./bibtex/IEEEabrv, ./bibtex/mybibfile} 

\addtolength{\textheight}{-12cm}   

\end{document}